\newtheorem{proposition}{Proposition}
\title{DavIR: Data Selection via Implicit Reward for Large Language Models}
\author{ \bf
Haotian Zhou$^{\dagger, 1}$,
Tingkai Liu$^{\dagger,2}$, 
Qianli Ma$^{1}$,
Yufeng Zhang$^{1}$,
Jianbo Yuan$^{1}$\\
\bf 
Pengfei Liu$^{3,}$\thanks{Corresponding authors},
Yang You$^{4,\ast}$,
Hongxia Yang$^{1,\ast}$ \\
$^\dagger$ Equal Contribution \\
$^1$ByteDance, Inc. \\
$^2$NeuroAI Scholar, Cold Spring Harbor Laboratory \\
$^3$Generative Artificial Intelligence Research Lab, Shanghai Jiao Tong University \\
$^4$School of Computing, National University of Singapore \\
}
\begin{document}
\maketitle
\begin{abstract}
We introduce DavIR, a model-based data selection method for post-training Large Language Models. DavIR generalizes Reducible Holdout Loss to core-set selection problem of causal language modeling, and quantifies the ``learnability'' of a given datum with respect to a pre-trained LLM based on relative reduction in loss during fine-tuning, a metric we show to be closely related to the implicit reward model described in Direct Preference Optimization (DPO). 
We show that 6\% of Alpaca dataset selected with DavIR can steer both the LLaMA and Gemma model family to produce superior performance compared to the same models trained on the full 52K dataset. We also show that Alpaca dataset compressed with DavIR can be combined with GSM8K dataset to effectively balance open-domain freeform QA and mathematical reasoning capabilities. Finally, we apply the DavIR objective to DPO and develop a normalized DavIR-DPO objective which improves alignment performance of Zephyr-7B-SFT model by 8\% (relative) on AlpacaEval, compared against training on vanilla DPO objective.
\end{abstract}

\section{Introduction}
Large Language Models (LLMs)~\citep{brown2020language, chowdhery2022palm, touvron2023llama, ouyang2022training} have sparked a revolution in the field of Natural Language Processing (NLP), with far reaching impacts in domains such as law~\citep{cui2023chatlaw}, medical~\citep{singhal2022large} and finance~\citep{wu2023bloomberggpt}.

A critical step in the current paradigm of post-training LLMs is Supervised/Instruction Fine-tuning (SFT/IFT), which enables pre-trained models to exhibit strong instruction-following capabilities~\citep{chung2022scaling, ouyang2022training, touvron2023llama, wang2022self, zheng2023judging}. Selecting the most effective training data during this stage is particularly important since effective steering of LLM during SFT could be achieved by just a few thousand carefully curated data~\citep{zhou2023lima}. Previous approaches to selecting SFT training data focused on data quality and diversity~\citep{ji2023exploring, zhou2023lima, chen2023alpagasus, chen2023maybe, li2023selfalignment}, guided by the intuition of encouraging LLMs to output accurate and reliable information while maintaining generalization capabilities to a wide range of tasks and scenarios.

However, by focusing on the quality and diversity of the data, existing methods are \emph{data-centric}, and are agnostic to the capabilities of the pre-trained model upon which fine-tuning occurs. Instead, following the ``Superficial Alignment Hypothesis'' ~\citep{zhou2023lima} which postulates that fine-tuning process unlocks the capabilities of pre-trained LLMs, we seek a model-centric data selection algorithm that chooses data that:
\begin{enumerate}[leftmargin=*,noitemsep,topsep=5pt]
    \item Quantifies the degree to which a model ``learns'' a data before and after training;
    \item Does not require querying closed-source teacher models which may lead to security concerns;
    \item Is theoretically grounded in the implicit reward function of the underlying LLM (see Section.~\ref{sect:method}).
\end{enumerate}
We note that the previously proposed Reducible Holdout Loss~\citep{rho} admits a simplification that satisfy the three requirements above~\citep{rafailov2023direct}. However, when applying RHO-like objectives to language modeling tasks, we observed a significant challenge: the RHO metric is highly correlated with the sequence length of the input data. This correlation introduces an undesirable bias in the data selection process, reducing the core-set selection to an approximation of length-based filtering. We show that this issue is inherent to the \emph{sequential} nature of language modeling in state-of-the-art LLMs, and cannot be resolved by normalizing the total cross entropy loss by number of tokens in a datum (document).

Instead, a subtle yet crucial change in normalization of the RHO objective - normalizing with reference model loss instead of number of tokens - dramatically reduced the length dependency of the object. We term this modified RHO objective, and the consequent data selection method, \textbf{DavIR} ({\bf Da}ta Selection {\bf v}ia {\bf I}mplicit {\bf R}eward).

We demonstrate the effectiveness of DavIR across model families (LLaMA~\citep{touvron2023llama}, Gemma ~\citep{gemma,gemma2}) and across data domains (Alpaca~\citep{alpaca},  LIMA~\citep{zhou2023lima}, GSM8K~\citep{cobbe2021training}) and across benchmarks benchmarks (Self-Instruct~\citep{wang2022self}, Vicuna~\citep{zheng2023judging}, Koala~\citep{koala_blogpost_2023}, OpenAssistant~\citep{köpf2023openassistant}, Helpful Base~\citep{bai2022training}, GSM8K~\citep{cobbe2021training}). We show that DavIR outperforms \emph{all} (to the best of author's knowledge) state-of-the-art core-set selection methods across benchmarks.


Finally, as the introduction of normalization in the DavIR objective led to a deviation from the implicit reward model given by the vanilla DPO objective, we propose {\bf DavIR-DPO} that incorporates the normalization proposed in the current work. 
We show that DavIR-DPO metric is the least correlated with the difference in length of paired responses in UltraFeedback dataset \citep{ultrafeedback}, and training Zephyr-7B-SFT model using the DavIR-DPO metric led to an 8\% boost of length-controlled performance on AlpacaEval~\citep{alpaca_eval, alpaca_eval_v2} as compared to when trained using the vanilla DPO objective.

\begin{figure}[!t] 
\centering 
\includegraphics[width=\linewidth]{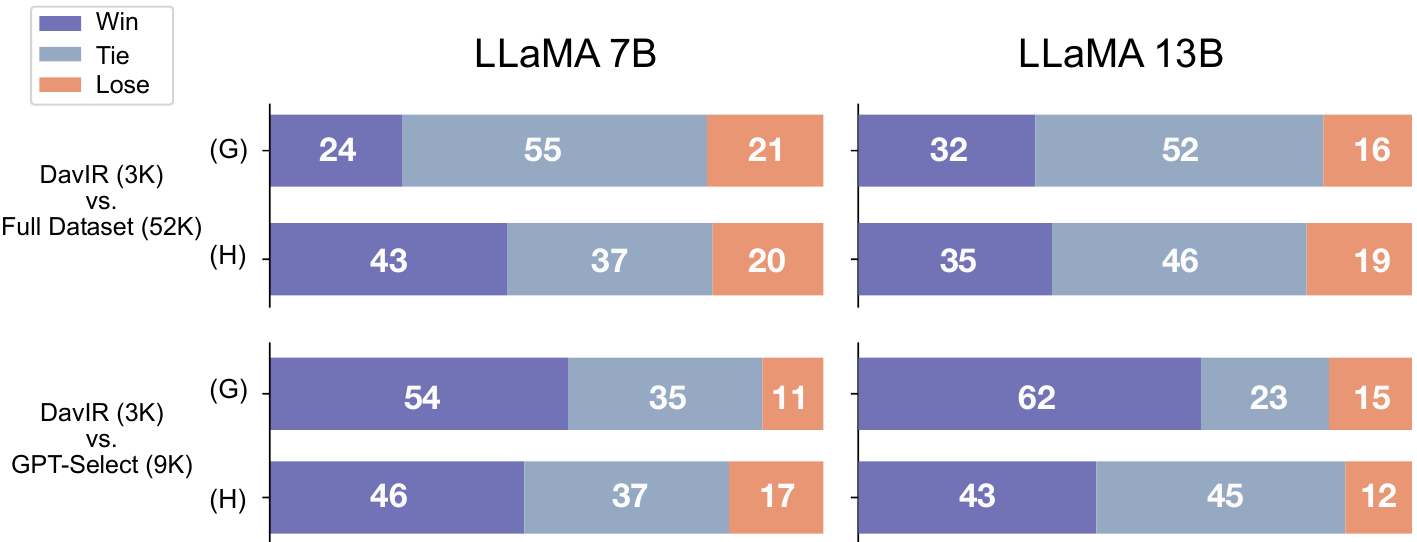} 
\caption{
\textbf{DavIR outperforms full data fine-tuning and data selection based on teacher LLM across model scales.} 
Performance comparison of 7B and 13B parameter models fine-tuned with data selected using DavIR (3,000 items), the full Alpaca dataset (52K), and data filtered using ChatGPT (9,229 items). ``G" represents evaluation using GPT-4, and ``H" represents human evaluation. The statistical significance of performance gain of DavIR over training on full dataset and other core-set selection methods are established in subsequent sections.}
\label{fig:fig1}
\end{figure}

\section{Background and Related Works}
\paragraph{Supervised (Instruction) Fine-tuning of LLM}
In training LLMs, Supervised (Instruction) Fine-tuning (SFT/IFT) plays a pivotal role during post-training by fine-tune LLMs with a small amount of data to enable instruction-following and multi-round dialogue. Two predominant methods of collecting SFT training data are 1) distillation from teacher models (e.g. Self-Instruct~\citep{wang2022self}, Alpaca ~\citep{alpaca}, Evol-Instruct~\citep{xu2023wizardlm}) and 2) manual annotation (e.g. InstructGPT~\citep{ouyang2022training}, Vicuna~\citep{zheng2023judging}, LIMA ~\citep{zhou2023lima}). 

\paragraph{Implicit Reward in Direct Preference Optimization}
First proposed in~\citep{rafailov2023direct}, Direct Preference Optimization (DPO) emerged as a post-training method following supervised-fintuning. DPO simplifies the RLHF pipeline by directly optimizing a language model using preference data, eliminating the need for explicit reward modeling and reinforcement learning. The simplicity of DPO and its effectiveness has led to wide adoption across models. A large body of follow-up works have been proposed that modify the DPO objective to improve robustness \citep{dpo_ipo, dpo_exo, dpo_robust, dpo_nca, dpo_sppo}, address issues of data scarcity \citep{dpo_rso, dpo_bco} or provide stronger control over likelihood of producing winning and losing responses \citep{dpo_apo, dpo_aot}. Recent works have also began to explore length-dependencies of the DPO objective \citep{multi_dpo, park2024disentangling}.

\paragraph{Core-set Selection for LLM}
Core-set selection and dataset pruning has a long and rich history in ML research~\citep{har2005smaller, paul2021deep}, where the goal is to find small subsets of training data which gives similar or superior performance as compared to training on the full dataset. A wide range of metrics have been explored for core-set selection, including model loss (e.g. RETRIEVE~\citep{killamsetty2021retrieve}, RHO~\citep{rho}), gradient (e.g. CRAIG~\citep{mirzasoleiman2020coresets}), influence function (e.g. \citep{yang2022dataset}) and clustering. 
\citep{birodkar2019semantic, sorscher2022beyond}. Within the scope of LLMs, prior arts have primarily focused on data selection during pre-training, such as DoReMi~\citep{xie2023doremi}, RHO, DRO~\citep{oren2019distributionally}, and DSIR~\citep{xie2023data}. For post-training LLMs, recent works have focused on selecting training data based on quality, based on either 1) human annotation (e.g. LIMA~\citep{zhou2023lima}), 2) LLM (e.g. AlpaGasus~\citep{chen2023alpagasus}) or 3) validation loss on evaluation dataset (e.g. Instruction Mining~\citep{cao2023instruction}).

\section{DavIR: Data Selection via Implicit Reward}
\subsection{DavIR in Supervised Fine-Tuning}\label{sect:method}
As an ever increasing number of post-training datasets are developed for LLMs {\color{red} }, it is important for practitioners to select a compute-permissible subset of the training data that achieves similar, or \emph{better}, performance than the full available training corpus.

As such, the task that DavIR is set out to solve is one of core-set selection for post-training LLM: given a base model $\pi_\text{base}$, and a collection of training data $D_\text{full} = \{(x_i, y_i)\}_i$ (where $(x_i, y_i)$ represents the prompt/response pair that constitutes a training datum), find a \emph{minimal} subset of the training dataset $D_{train} \subset D_\text{full},~|D_{train}| \ll |D_\text{full}|$ such that the model trained on $D_{train}$ achieves comparable, or {\bf better}, performance than that trained on $D_\text{full}$.

At the core of the DavIR algorithm is the concept of ``learnability''  in post-training LLMs. We are motivated by the ``Superficial Alignment Hypothesis'' \citep{zhou2023lima} which suggests the post-training stage of LLMs involves using a small number of carefully selected training samples to steer a pre-trained LLMs to align with desired response patterns.
In particular, this suggests that the training samples ought to be tightly coupled with the underlying capabilities of the base LLM model, or that such samples need to be ``learnable'' by the base model. 

A simple and intuitive quantification of ``learnability'' is subtracting the evaluation loss of the base model $\pi_\text{base}$ from that of the reference model $\pi_\text{ref}$ ($\pi_\text{base}$ trained on all of $D_\text{full}$):
\begin{equation}
\begin{split}
    &S_{\text{RHO-LM}}(x,y) = \mathcal{L}_\text{base}(y|x) - \mathcal{L}_\text{ref}(y|x) \\
    &= \left[-\log \pi_\text{base}(y|x)\right] - \left[-\log \pi_\text{ref}(y|x)\right].
\end{split}
\label{eqn:rho_lm}
\end{equation} This approach is akin to that of Reducible Holdout Loss (RHO) \citep{rho} and we refer to this vanilla generalization of RHO to causal language modeling as simply RHO-LM.

We note that the RHO-LM metric in Equation~\eqref{eqn:rho_lm} is closely related to the implicit reward function in the Direct Preference Optimization~\citep{rafailov2023direct} procedure. As shown in \citep{rafailov2023direct}, under mild conditions, reward functions $r(x,y)$ consistent with the Bradley-Terry (BT) preference model \citep{bradley1952rank} can be equivalently written as:
\begin{equation}
\begin{split}
    r(x,y) &= \beta \log\frac{\pi(y|x)}{\pi_\text{base}(y|x)}  \\
    &= \beta \cdot\left[\mathcal{L}_\text{base}(x,y) - \mathcal{L}(x,y)\right]
\end{split}
\label{eqn:reward}
\end{equation}
for some language model $\pi(y|x)$ obtained by training via Reinforcement Learning with Human Feedback procedure (using Proximal Poliy Optimization) from a base model $\pi_\text{base}(y|x)$ using the said reward function $r(x,y)$ till optimality.

In other words, the reference model $\pi_\text{ref}(y|x)$ in RHO-LM can be obtained from $\pi_\text{base}(y|x)$ via reward maximization of the \emph{implicit} reward function $r(x,y)$ in Equation \eqref{eqn:reward}. As such, selecting data via RHO-LM using the score function can be viewed as choosing data with maximum reward given by this implicit reward model.

However, we found empirically that the vanilla RHO-LM metric in Equation~\eqref{eqn:rho_lm} is highly correlated with sequence length of the training data (see also Appendix.~\ref{app:A}), an issue that persists despite aggregating the token-level losses via the averaging operation. This is inherently due to the sequential nature of language modeling, where increasing sequence length introduces additional contexts that constraints the distributions of all (subsequent) tokens. The effect of this length dependency is not to be under-estimated, as Table.~\ref{tab:metric_length_corr} shows that correlation between length and average (across tokens) cross-entropy loss as well as entropy of predictive probabilities could be as high as -0.9 (on a scale of [-1, 1]). Consequently, the RHO objective, which subtracts these length-dependent objectives, is also prone to be highly correlated with sequence length (see Table.~\ref{tab:diff_length_corr}) - an issue that only applies to language modeling and was therefore overlooked by the original RHO work which dealt with image classification or NLP tasks with single classification objective (i.e. grammatical correctness in CoLA~\citep{cola} and sentiment analysis in SST-2~\citep{sst2}).

\begin{table}[t]
\resizebox{\linewidth}{!}{%
    \begin{tabular}{@{}ll|llll@{}}
                            &                   & \multicolumn{2}{l}{Pearson} & \multicolumn{2}{l}{Spearman} \\ \midrule
    Dataset                 & Model             & Entropy       & Loss        & Entropy        & Loss        \\ \midrule
    \multirow{4}{*}{Alpaca} & albert-base-v2    & -0.67         & -0.76       & -0.90          & -0.97       \\
                            & bert-base-uncased & -0.64         & -0.79       & -0.88          & -0.95       \\
                            & gemma-2-2b        & -0.66         & -0.61       & -0.80          & -0.76       \\
                            & gemma-2b          & -0.68         & -0.56       & -0.83          & -0.69       \\ \midrule
    \multirow{4}{*}{GSM8K}  & albert-base-v2    & -0.78         & -0.85       & -0.83          & -0.91       \\
                            & bert-base-uncased & -0.73         & -0.57       & -0.77          & -0.90       \\
                            & gemma-2-2b        & -0.75         & -0.72       & -0.79          & -0.74       \\
                            & gemma-2b          & -0.77         & -0.66       & -0.80          & -0.68       \\ \midrule
    \multirow{4}{*}{MBPP}   & albert-base-v2    & -0.69         & -0.83       & -0.64          & -0.85       \\
                            & bert-base-uncased & -0.76         & -0.87       & -0.80          & -0.93       \\
                            & gemma-2-2b        & -0.85         & -0.82       & -0.90          & -0.85       \\
                            & gemma-2b          & -0.85         & -0.82       & -0.91          & -0.83       \\ \bottomrule
    \end{tabular}
}
\caption{
{\bf Language modeling objectives are highly correlated with sequence lenghth.}
Pearson correlation and Spearman rank correlation of entropy and loss with respect to number of tokens in a given document.
Note that all correlations are negative, indicating that token-level entropy/loss decrease as corresponding context length increases.
}
\label{tab:metric_length_corr}
\end{table}

\begin{table*}[t]
    \centering
    \resizebox{\linewidth}{!}{%
    \begin{tabular}{@{}rrrrrrrrrr@{}}
    \multicolumn{1}{l}{\textbf{}} & \multicolumn{1}{l}{\textbf{}} & \multicolumn{2}{c}{\textbf{Alpaca}}                                   & \multicolumn{2}{c}{\textbf{GSM8K}}                                    & \multicolumn{2}{c}{\textbf{LIMA}}                                     & \multicolumn{2}{c}{\textbf{MBPP}}                                     \\ \cmidrule(l){3-10} 
    \multicolumn{1}{l}{\textbf{}} & \multicolumn{1}{l}{\textbf{}} & \multicolumn{1}{l}{RHO-LM} & \multicolumn{1}{l}{DavIR} & \multicolumn{1}{l}{RHO-LM} & \multicolumn{1}{l}{DavIR} & \multicolumn{1}{l}{RHO-LM} & \multicolumn{1}{l}{DavIR} & \multicolumn{1}{l}{RHO-LM} & \multicolumn{1}{l}{DavIR} \\ \midrule
    \multirow{2}{*}{gemma-2b}     & \multicolumn{1}{r|}{spearman $\downarrow$} & 0.75                             & {\bf 0.30}                               & 0.58                             & {\bf 0.06}                               & 0.20                             & {\bf 0.02}                               & {\bf 0.39}                             & 0.62                               \\
                                  & \multicolumn{1}{r|}{pearson $\downarrow$}  & 0.64                             & {\bf 0.33}                               & 0.59                             & {\bf 0.07}                               & 0.23                             & {\bf 0.01}                               & {\bf 0.41}                             & 0.68                               \\ \midrule
    \multirow{2}{*}{gemma-2-2b}   & \multicolumn{1}{r|}{spearman $\downarrow$} & 0.83                             & {\bf 0.47}                               & 0.66                             & {\bf 0.45}                               & 0.17                             & {\bf 0.01}                               & 0.89                             & {\bf 0.39}                               \\
                                  & \multicolumn{1}{r|}{pearson $\downarrow$}  & 0.64                             & {\bf 0.46}                               & 0.65                             & {\bf 0.44}                               & 0.18                             & {\bf 0.10}                               & 0.81                             & {\bf 0.33}                               \\ \bottomrule

    \end{tabular}
    }
\caption{
{\bf DavIR reduces length dependency from the RHO-LM objective.}
Absolute Pearson correlation and Spearman rank correlation of entropy and loss with respect to number of tokens in a given document.
See Appendix.~\ref{app:A} for more detail.
}
\label{tab:diff_length_corr}
\end{table*}

Fortunately, we found that a simple, yet highly effective, normalization technique could dramatically mitigate the length-dependency of the RHO-LM metric, resulting in the normalized score function, which we term DavIR:
\begin{equation}
    \resizebox{0.9\linewidth}{!}{%
        $S_{\text{DavIR}}(x_i, y_i) = \frac{\mathcal{L}_\text{base}(x_i, y_i) - \mathcal{L}_\text{ref}(x_i, y_i)}{\mathcal{L}_\text{base}(x_i, y_i)}$
    }
    \label{eqn:score_norm}
\end{equation}
Note that the denominator in the normalization could be either the base or the reference losses without impacting the ordering of the data via the DavIR metric $S_{\text{DavIR}}$ (see Appendix.~\ref{app:B} for a simple proof). The reduction in both spearman and pearson correlation is shown in Table.~\ref{tab:diff_length_corr}.


Given the DavIR score function, the DaVIR algorithm for supervised fine-tuning data selection is simply given as Algorithm~\ref{alg:davir}.

\begin{algorithm}
\caption{DavIR for Supervised Fine-tuning}
\begin{algorithmic}[1]
\State $\pi_\text{ref}(y|x) \leftarrow \pi_\text{base}(y|x)$ trained on $D_\text{full}$
\For{each $(x_i, y_i) \in D_\text{full}$}
    \State $\mathcal{L}_\text{base}(x_i, y_i) \leftarrow -\log\pi_\text{base}(y_i|x_i)$
    \State $\mathcal{L}_\text{ref}(x_i, y_i) \leftarrow -\log\pi_\text{ref}(y_i|x_i)$
    \State Compute $S_{\text{DavIR}}(x_i, y_i)$ as in Equation \eqref{eqn:score_norm}
\EndFor
\State Re-train $\pi_\text{base}$ on $\text{top-}k_{D_\text{full}} ~S_\text{DavIR}(x_i, y_i)$
\end{algorithmic}
\label{alg:davir}
\end{algorithm} 

As we later demonstrate, while vanilla RHO-LM is effective in selecting a subset of the training data, it far under-performs the length-regularized DavIR algorithm in the downstream tasks performances across multiple datasets and models (see  Figure.~\ref{Fig.3.5-winscore}).
In fact, as demonstrated in Table~\ref{tab:davir_vs_everyone}, DavIR is able to outperform all (to the best knowledge of the authors) existing core-set selection techniques on post-training LLMs.

Finally, we note that both RHO-LM and DavIR score functions capture the essence of training on data that are ``learnable, worth learning, and not yet learnt''~\citep{rho}, while dramatically reducing the length-dependencies of the original RHO objective. By focusing on the same next-token-prediction objective as training LLM, and omitting confounding factors such as additional small proxy models~\citep{xie2023doremi} or hold-out dataset~\citep{rho}, DavIR provides exact single datum-level measurement of ``learnability'' that tightly couples with the underlying capabilities of the pre-trained model.

\subsection{DavIR in Direct Preference Optimization}
The performance gain of DavIR over the vanilla RHO-LM motivated use to revisited the DPO training objective and the underlying BT preference model. In particular, we propose a simple generalization of the DPO objective with normalization from the reference model loss. In particular, inspired by the formula in Equation.~\ref{eqn:score_norm}, we propose the following DavIR-DPO loss:
\begin{equation}
\begin{split}
    &\mathcal{L}_{\text{DavIR-DPO}}(\pi_\theta; \pi_{\text{ref}}) \\
    &= -\mathbb{E}\Bigg[ \log \sigma \Big( \beta \log \frac{\pi_\theta(y_w \mid x)}{\pi_{\text{ref}}(y_w \mid x)} \big / | \log \pi_{\text{ref}}(y_w \mid x) | \\
    &- \beta \log \frac{\pi_\theta(y_l \mid x)}{\pi_{\text{ref}}(y_l \mid x)} / | \log \pi_{\text{ref}}(y_l \mid x) | \Big) \Bigg].
\end{split}
\label{eqn:davir_dpo}
\end{equation}
We remark that concurrent research on regularizing the DPO loss by the length of the responses has been proposed \citep{park2024disentangling}. 

\begin{table}[t]
\resizebox{\linewidth}{!}{%
\begin{tabular}{@{}llc@{}}
DPO Type       & Reference & Resp. Len. Diff. \\ \midrule
Vanilla        & \citep{rafailov2023direct}          & 0.38                \\
AOT            & \citep{dpo_aot}          & 0.12                \\
APO (Down)     &  \citep{dpo_apo}          & 0.36                \\
APO (Zero)     &  \citep{dpo_apo}           & 0.39                \\
EXO (Pair)     & \citep{dpo_exo}           & 0.42                \\
Hinge          & \citep{dpo_rso}           & 0.39                \\
IPO            & \citep{dpo_ipo}          & -0.10               \\
NCA            &  \citep{dpo_nca}          & 0.29                \\
Robust         & \citep{dpo_robust}           & 0.38                \\
SPPO (Hard)    &  \citep{dpo_sppo}          & -0.11               \\
\textbf{DavIR} &  Here         & \textbf{0.07}       \\ \bottomrule
\end{tabular}
}
\caption{{\bf Pearson correlation of DPO objective against difference in response length for different flavors of DPO loss type in UltraFeedback~\citep{ultrafeedback}}}
\label{tab:dpo_length_corr}
\end{table}


\section{Experiments and Results}
\subsection{Experimental Setup}
\paragraph{Training Dataset.}
Training datasets used in the current study are shown in Table.~\ref{tab:train_data}. Note that both Alpaca-4 and Alpaca-3.5 were proposed in \citep{alpaca}, with the same prompts but different responses generated by GPT-4 and GPT-3.5-Turbo respectively.

\begin{table}[!t]
\resizebox{\linewidth}{!}{%
\begin{tabular}{@{}lll@{}}
\toprule
Name        & Number & Source \\ \midrule
LIMA~\citep{zhou2023lima}                  & 1K     & Human       \\
Alpaca-4~\citep{alpaca}             & 52K    & GPT-4       \\
Alpaca-3.5~\citep{alpaca}             & 52K    & GPT-3.5-Turbo     \\
GSM8K~\citep{cobbe2021training}                 & 7.5K   & Human       \\ 
UltraFeedback~\citep{ultrafeedback}   & 61K & GPT-4 \\\bottomrule
\end{tabular}
}
\caption{{\bf Training datasets used for experimental validation of DavIR.}}
\label{tab:train_data}
\end{table}

\paragraph{Test Dataset and Evaluation Method.} \quad
For open-domain freeform QA style evaluation of LLaMA models, our test set is an amalgamation of 800 prompts from HH-RLHF, Koala, Self-Instruct, Open Assistant, and Vicuna, covering multiple aspects of daily use, such as generating, math, coding, and instruction-following. The model generated responses were evaluated either with GPT-4 (adjusted for positional bias in evaluation prompt) or human evaluator (blind ranking) as referee. Note that only 100 questions were randomly selected for human evaluation (20 questions per dataset). The performance of models trained with DavIR filtered dataset is compared against either 1) same based model trained with other data selection method as in Fastchat~\citep{zheng2023judging}, or 2) against frozen model (e.g. Text-Davinci-003) as in AlpacaEval ~\citep{alpaca_eval}. 
Experiments with Gemma and Zephyr models were evaluated using AlpacEval2.0~\citep{alpaca_eval_v2}.

\paragraph{Models and Baselines.}
For IFT/SFT experiments, we used LLaMA-7B, LLaMA-13B \citep{touvron2023llama}, Gemma-2B~\citep{gemma} models as our base models $\pi_{base}$. DavIR is compared against a wide range of baseline data selection methods (See Table.~\ref{tab:other_methods}): 1) full dataset, 2) random sampling, 3) ChatGPT-based data filtering~\citep{chen2023alpagasus}, 4) RHO-LM\citep{rho}, 5) EL2N~\citep{el2n}, 6) Forgetting score ~\citep{forgetting}, and 7) Influence function-based DataInf~\citep{datainf}. Note that, for ChatGPT-based data filtering approach, a specific version of ChatGPT API was prompted to assign an integer quality score (1-5) to each data point in the Alpaca-3.5 dataset. To avoid introducing additional variabilities due to changes of ChatGPT API, for comparison against ChatGPT-based data filtering, we did not generate new data by querying ChatGPT, but instead directly used the 9k subset of Alpaca-3.5 reported in \citep{chen2023alpagasus}. The comparison with which is shown in Figure.~\ref{fig:fig1} where ChatGPT-based data filtereing is referred to as ``GPT-Select''. DPO experiments were conducted using Zephyr-7B-SFT~\citep{Tunstall_The_Alignment_Handbook} and compared against training with the vanilla DPO objective.

\begin{table*}[t]
\resizebox{\textwidth}{!}{%
    \begin{tabular}{llllll}
    \toprule
    \textbf{Name} & \textbf{Model Dep.} & \textbf{Val. Set} & \textbf{Method/Metric}                                    & \textbf{Domain}      & \textbf{Metric Selection} \\
    \midrule
    EL2N          & Loss                      & No               & $\ell_2$ Loss between logits and 1-hot label                         & Image Class. & Larger                        \\
    Forgetting    & Loss                      & No               & \# epochs were data loss increase                         & Image Class. & Larger                               \\
    DataInf       & Gradient                  & Yes              & Approx. influence function                      & Text Generation      & Most Negative                        \\
    AlpaGasus     & ChatGPT                  & No                & ChatGPT scores 0-5                          & Text Generation      & Larger                               \\
    RHO           & Loss                      & Yes              & Eq.~\eqref{eqn:rho_lm} & Image Class./NLP Class. & Larger                               \\
    DavIR         & Loss                      & No               & Eq.~\eqref{eqn:score_norm}   & Text Generation      & Larger                              
    \\ 
    \bottomrule
    \end{tabular}
}
\caption{{\bf Baseline core-set selection methods.}}
\label{tab:other_methods}
\end{table*}

\subsection{DavIR in SFT}
\subsubsection{Impact of Length Normalization in DavIR}
We first demonstrate the effect of normalization in the DavIR objective as compared to the RHO-LM objective. As shown in Figure.~\ref{Fig.3.5-winscore}, while LLaMA models trained on subset of data selected using the RHO-LM objective can achieve comparable performance to that of the model trained on the full dataset, DavIR outperforms the full dataset baseline by a wide margin.

\begin{figure}[!t]
\centering 
\includegraphics[width=\linewidth]{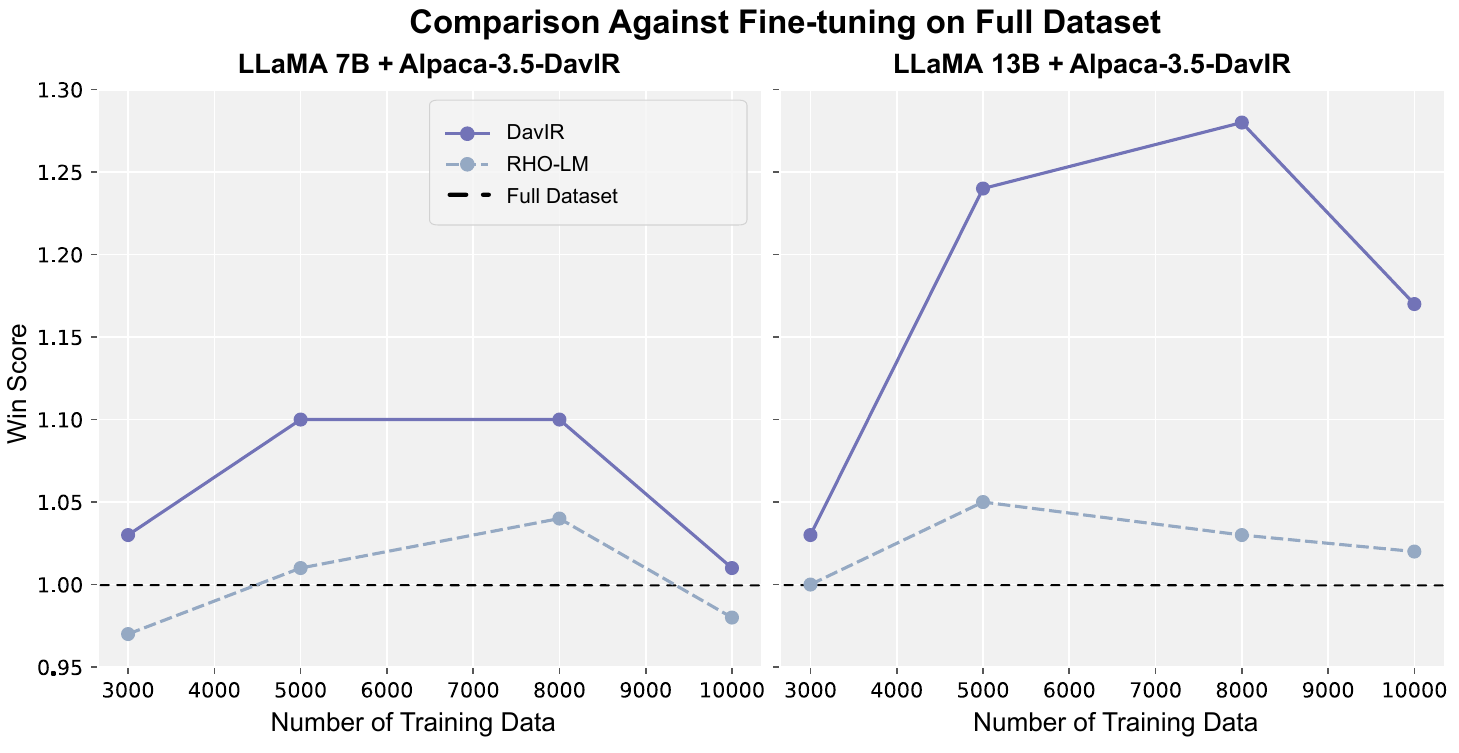} 
\caption{\textbf{Models fine-tuned with data selected by DavIR surpass the full dataset on Alpaca3.5.} This figure shows the win score comparison between models trained with different sizes of datasets and the full dataset, as well as the improvement brought by using the normalization method. We select the model fine-tuned on the full dataset as the baseline. Win Score is computed as $1+(N_{win}-N_{lose})/N_{total}$, with $1$ being equal performance.}
\label{Fig.3.5-winscore}
\end{figure}

\subsubsection{16x Compression in Freeform Chat Dataset}
As show in Figure.~\ref{fig:fig1}, both LLaMA-7B and LLaMA-13B model can be effectively fine-tuned with  a 3K subset sampled from the 52K Alpaca dataset using DavIR. A natural question to ask is whether this is a result of simply reducing redundancy in the training dataset, which could also be achieved by simply randomly sampling the dataset. To address this question, we compared performance of DavIR against random sampling and fine-tuning on full Alpaca-4 dataset using Text-Davinci-003 as a frozen baseline model. We show in Figure.~\ref{Fig.4-winrate} that the number of training data, when randomly sampled, improve model performance logarithmically, dramatically under-performing the proposed method.

\begin{figure}[!t]
\centering 
\includegraphics[width=\linewidth]{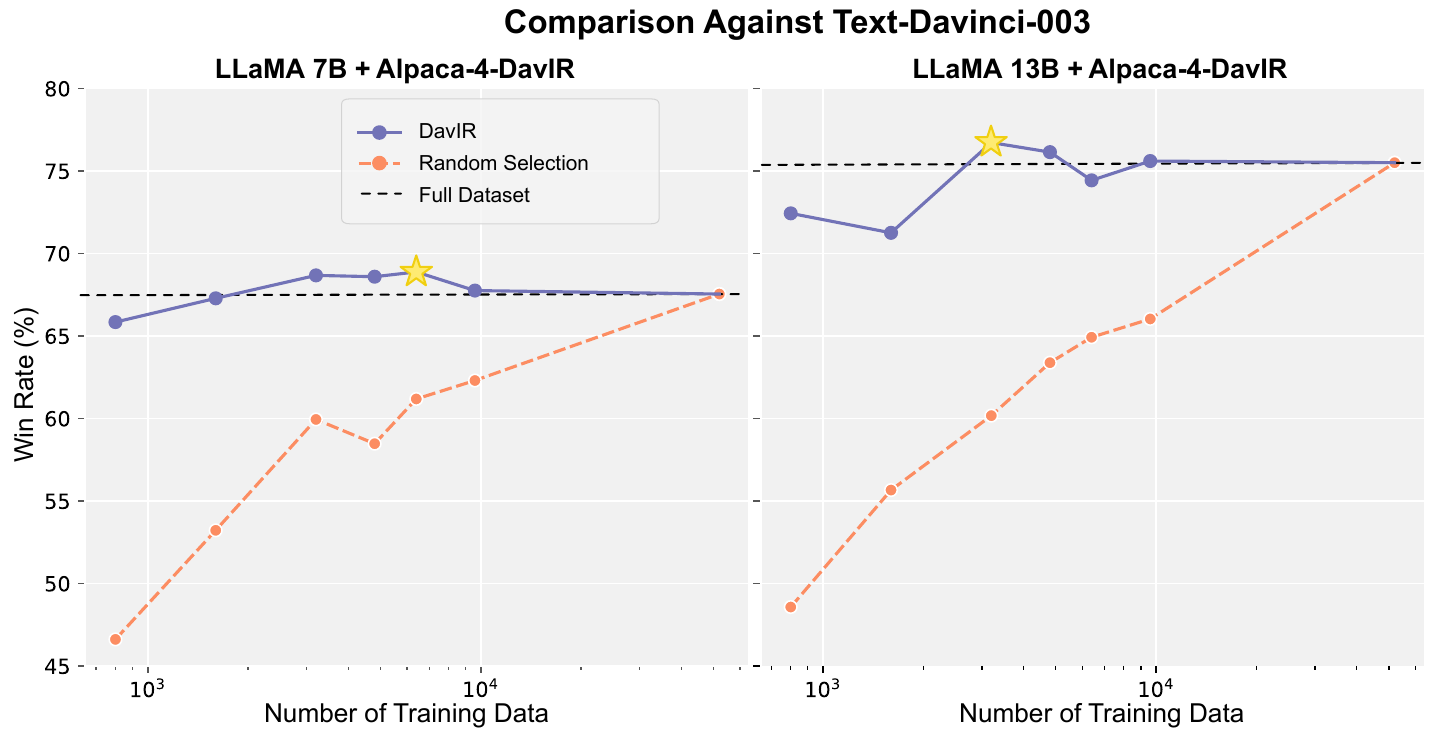} 
\caption{\textbf{DavIR significantly out perform random sampling.} Using Text-Davinci-003 as the frozen baseline model, we show that performance of random selection of the Alpaca-4 dataset scales logarithmically with number of training data, significantly under-performing DavIR. Note that the x-axis is log-scale. Win Rate is computed as $N_{win}/N_{total}$, where $N_{win}, N_{total}$ are number of win and total number of test data.
} 
\label{Fig.4-winrate}
\end{figure}

To further demonstrate DavIR's effectiveness against other baseline methods, we compared DavIR's performance scaling across number of training data (selected from Alpaca dataset) against 4 other core-set selection methods (EL2N, Forgetting Score, DataInf, RHO), evaluated on Gemma-2B model using AlpacaEval. As shown in Table.~\ref{tab:davir_vs_everyone}, DavIR is the only method that consistently out-performs full dataset baseline across number of training samples. Even in the low data regime (less than 5K selected from the 52K dataset) when DavIR is not the best performing method, its performance gap with the best performing method is small.
We'd also like to emphasize that computing the DavIR score requires only computing validation losses. In contrast, the second best performing algorithm (DataInf) requires significantly more compute as it requires computing Influence Functions via gradient and approximated Hessian. To show that the performance gain of DavIR compared against other methods is \emph{statistically} significant, we estimated the 95\% confidence interval of the AlpacaEval score via bootstrap sampling (shown in Table.~\ref{tab:gemma2b_with_CI}). Since the bootstrap sampled distribution of AlpacaEval score is highly Gaussian, we performed t-test between the sample distributions of AlpacaEval scores, will DavIR beating almost all baseline methods across number of data with very low p-values, providing conclusive evidence of the effectiveness of DavIR. Refer to Appendix.~\ref{sect:statistical_analysis} for more details on statistical analysis.

\begin{table*}[t]
\centering
\resizebox{.7\linewidth}{!}{%
    \begin{tabular}{@{}llllllllll@{}}
    \toprule
    \textbf{Method}             & \textbf{Selection} & \textbf{3000} & \textbf{4000} & \textbf{5000} & \textbf{6000} & \textbf{7000} & \textbf{8000} & \textbf{9000} & \textbf{10000} \\ \midrule
    Random                      & N/A                & 10.6          & 12.7          & 15.9          & 16.1          & 17.0          & 16.3          & 16.7          & 17.6           \\
    \multirow{2}{*}{EL2N}       & Lowest             & 10.3          & 13.5          & \textbf{17.6} & 17.6          & 19.0          & 18.4          & 18.7          & 18.5           \\
                                & Highest            & 10.0          & 11.1          & 11.3          & 11.8          & 12.4          & 13.3          & 13.9          & 14.3           \\
    \multirow{2}{*}{Forgetting} & Lowest             & 10.4          & 12.8          & 16.7          & 16.6          & 17.1          & 17.0          & 17.7          & 16.7           \\
                                & Highest            & 9.5           & 12.4          & 13.4          & 15.5          & 16.7          & 17.8          & 18.9          & 18.2           \\
    \multirow{2}{*}{DataInf}    & Lowest             & 10.3          & 12.2          & 13.9          & 15.0          & 15.0          & 14.4          & 15.3          & 15.4           \\
                                & Highest            & 10.3          & \textbf{13.7} & 15.9          & 18.6          & 18.7          & 19.8          & 18.5          & 18.8           \\
    \multirow{2}{*}{RHO}        & Lowest             & 9.6           & 12.1          & 13.7          & 15.3          & 15.7          & 16.1          & 16.2          & 16.6           \\
                                & Highest            & 9.9           & 12.3          & 14.5          & 15.8          & 15.3          & 15.5          & 16.5          & 16.7           \\
    \multirow{2}{*}{DavIR}      & Lowest             & 10.6          & 12.7          & 14.2          & 14.3          & 13.1          & 12.0          & 13.2          & 13.6           \\
                                & Highest            & \textbf{10.8} & 13.1          & 17.1          & \textbf{20.2} & \textbf{20.2} & \textbf{20.4} & \textbf{20.2} & \textbf{19.7}  \\
    Full                        & N/A                & \textit{18.3} & \textit{18.3} & \textit{18.3} & \textit{18.3} & \textit{18.3} & \textit{18.3} & \textit{18.3} & \textit{18.3}  \\ \bottomrule
    \end{tabular}
}
\caption{{\bf Comparing DavIR to baselines for post-training Gemma-2B model on Alpaca dataset.} Performance reported here is the AlpacaEval win-rate against GPT-4. For completeness, we included the performances of Gemma-2B trained on Alpaca subsets selected by choosing both lowest and highest metric values (EL2N, Forgettting, DataInf, RHO, DavIR). Note that comparison against AlpaGasus is shown in Fig.~\ref{fig:fig1}, and is omitted here because we only have access to the full 9K+ subset reported by the authors of \citep{chen2023alpagasus} which limits our ability to perform ablation across number of data points. Refer to Appendix.~\ref{sect:statistical_analysis} for statistical analysis of the performance comparison between DavIR and other baseline methods.
}
\label{tab:davir_vs_everyone}
\end{table*}

Finally, to demonstrate the effectiveness of DavIR on other dataset, we performed data subset selection on the LIMA dataset. We show that even for a carefully curated dataset (1K), DavIR is still able to achieve a 3x compression (300) while achieving comparable performance to training on the full dataset (win score 1.01).

\subsubsection{Balancing Open-Domain QA and Mathematical Reasoning}
A key application of core-set selection in LLM in production is data flywheel scenario, where a constant stream of additional training data for multiple domdains need to be filtered and combined to produce the best and most-balanced model for a wide range of downstream tasks.
To that end, we evaluated the performance of LLaMA-7B model trained on a combination of the full GSM8K dataset (mathematical reasoning) and DavIR-fitlered Alpaca-4 subset (freeform QA). As shown in Figure.~\ref{Fig.mix}, while we do observe the issue of ``alignment tax''~\citep{casper2023open} where increasing Alpaca-4 data size caused a slight decrease in GSM8K accuracy, DavIR offers the flexibility for LLM developers to control the balance of open-domain QA with mathematical reasoning capabilities. In particular, the addition of 3.2K Alpaca-4 data (using 16.7\% of total trainig data) boosts open-domain QA performance from <10\% win-rate to >60\% win-rate, at the cost of 2\% reduction of GSM8K accuracy as compared to the model trained solely on GSM8K training set.

\begin{figure}[!t]
\centering 
\includegraphics[width=0.8\linewidth]{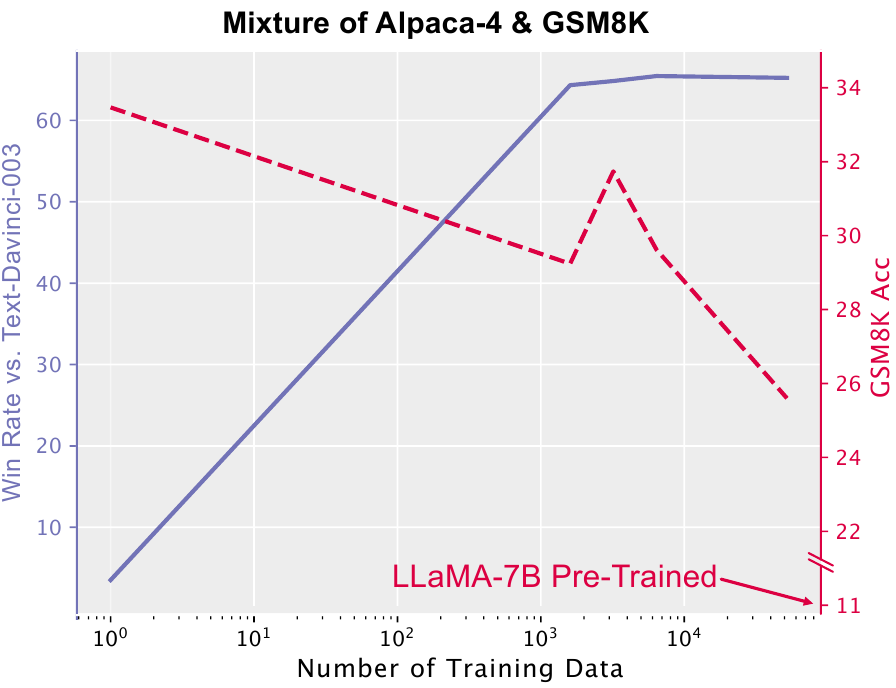} 
\caption{\textbf{Data mixing with LLaMA-7B and DavIR.} The x-axis represents the number of selected Alpaca-4 data points, plotted on a logarithmic scale.} 
\label{Fig.mix}
\end{figure}

\subsubsection{Generalization between Models}
As shown above, DavIR is highly effective across both model sizes (LLaMA-7B/-13B) as well as model families (LLaMA, Gemma). 
However, as DavIR was fundamentally motivated by the hypothesis that the best post-training data must be model-dependent, we sought to  examine the data selected by different models. Comparing the best data subset selected using LLaMA-7B and LLaMA-13B models, we observe that only 516 of the top 800 data are scored highly by both models (see Table.~\ref{tab.same} in Appendix.~\ref{app:C}), with the largest difference stemming from mathematical reasoning-related prompts as shown in Table.~\ref{tab.cata} in Appendix.~\ref{app:C}. Given that both models belong to the LLaMA family of model, and were presumably trained using similar training recipes (architecture, hyperparameters, datasets), we expect the discrepancy between data subsets selected by different models to widen when comparing between models of different families. This provides support for our intuition that the effectiveness in steering pre-trained model is highly dependent on the capabilities of the pre-trained model itself.

To probe the model-dependency of the post-training data selection from a different perspective, 
we explored a relaxed version of the DavIR algorithm. In this relaxed version, the base, reference model and re-trained models are allowed to be from different pre-trained models. For example, instead of using the LLaMA-7B base model throughout the data selection and re-training process, we experimented with computing DavIR score between LLaMA-7B base model and LLaMA-13B model trained on all of Alpaca dataset, and re-trained LLaMA-7B base model on the selected subset (and similarly for other combinations of base, reference and re-trained models). As shown in Table.~\ref{tab:generalization}, any misalignment in the models used in DavIR algorithm resulted in a decrease in performance, providing further support to the model-dependency of the optimal post-training data subset.

\begin{table}[t]
\resizebox{\linewidth}{!}{%
\begin{tabular}{@{}llllll@{}}
\toprule
$\pi_{base}$      & $\pi_{ref}$            & $\pi_{retrain}$  & Against     & Data & Win Score \\ \midrule
13B & 13B + Alp-4 & 13B & 13B + Alp-4 & 8K   & {\bf 1.2}      \\
7B  & 7B + Alp-4  & 7B  & 7B + Alp-4  & 8K   & {\bf 1.125}    \\\midrule
13B & 13B + Alp-4 & 7B  & 7B + Alp-4  & 8K   & 0.825    \\
7B  & 13B + Alp-4 & 7B  & 7B + Alp-4 & 8K   & 1.01     \\ \bottomrule
\end{tabular}
}
\caption{{\bf DavIR performs best when base, reference and re-trained models share the same pre-trained backbone.}
For simplicity, 7B/13B refer to LLaMA-7B and LLaMA-13B respectively, $\pi_{retrain}$ refers to the pre-trained model that is trained with the DavIR selected data subset. Win Score is computed against models fine-tuning on $D_\text{full}$ as shown in the ``Against'' column. Note that the first two rows correspond to experiments where there is no model mismatch and the bottom tow rows correspond to the relaxed DavIR algorithm with model mismatch.
}
\label{tab:generalization}
\end{table}

\subsection{DavIR in DPO}
We trained Zephyr-7B-SFT~\citep{Tunstall_The_Alignment_Handbook} on UltraFeedback~\citep{ultrafeedback} paired preference dataset using both vanilla DPO objective as well as the DavIR-DPO objective in Equation.~\ref{eqn:davir_dpo}. Both models are evaluated on AlpacaEval againest the text-davinci-003 model. We present the result in Table.~\ref{tab:results_dpo} show that Zephyr trained using the DavIR-DPO objective outperforms that using the vanilla DPO objective, especially when evaluated using length-controlled metric~\citep{alpaca_eval_v2}. Note that Zephyr model class was chosen for the DPO experiments, as opposed to LLaMA and Gemma as in the SFT experiments, for it differentiates between pretrained, instruction fine-tuned and DPO post-trained models, thus helping us isolate the effect of length-normalization at the DPO training stage.

\begin{table}[t]
\resizebox{\linewidth}{!}{%
\begin{tabular}{@{}lp{3cm}c@{}}
                 & Length-Controlled Win-Rate & Win-Rate \\ \midrule
Zephyr DPO       & 57.23                      & 82.31    \\
Zephyr DavIR-DPO & {\bf 61.83}                      & {\bf 82.96}    \\ \bottomrule
\end{tabular}
}
\caption{{\bf Comparing Zephyr trained on Davir-DPO vs. vanilla DPO objective.}}
\label{tab:results_dpo}
\end{table}

\section{Conclusion}
 We introduce DavIR, a model-based data selection method for LLM fine-tuning that focuses on ``learnability'' of data points given a base pre-trained model. We show that DavIR is closely related to, and is a generalization of, the Implicit Reward Model concept proposed in Direct Preference Optimization. By comparing DavIR to a wide range of data selection baselines, we demonstrate its effectiveness across models, data domain and data mixtures. Finally, we show that, by incorporating the proposed normalization back to the DPO objective, we are able to improve DPO performance after the supervised fine-tuning stage of LLM training.

\section{Limitations \& Discussions}

\paragraph{Integration of DavIR to Data Flywheel}
As briefly discussed above, data compression techniques such as DavIR serve a critical, albeit incomplete, role in the data flywheel of training LLMs. In particular, DavIR does not take into account other aspects of data selection such as quality and diversity. In practice, DavIR needs to be used in conjunction with methods such as weighted sampling and prompt classification to ensure that the core-set selection is performed in a manner that does not artificially bias the distribution of the selected data. In this work, we provided a simple example of data mixture between Alpaca and GSM8K which hints at the importance of using DavIR in a manner that is conscious of data diversity. In production, however, more careful design of the data pipeline is required, for which DavIR could serve as the data compression module.

\paragraph{Application of DavIR to Reasoning Tasks}
A key limitation of DavIR is that its effectiveness varies based on the application domain of the training dataset. In particular, when we applied DavIR to compressing the GSM8K training dataset alone for LLaMA models, we did not observe a clear performance gain with subset of the training data. In fact, as shown in Figure.~\ref{fig:gsm8k}, we observed almost linear scaling of number of GSM8K training sample and the GSM8K evaluation accuracy, suggesting that the GSM8K dataset was \emph{in-compressible} with LLaMA-7B using DavIR. We hypothesize that this could be caused by LLaMA-7B having insufficient underlying mathematical reasoning capabilities, leading to very large training data requirement. However, we could not rule out the possibility that perhaps Cross-Entropy Loss is a poor metric for how well data related to mathematical reasoning has been learnt by a given model, thereby rendering the normalized score metric unable to capture ``learnability'' of such data. We leave explorations of alternate metrics to Cross-Entropy Loss for future works.

\begin{figure}[h]
    \centering
    \includegraphics[width=\linewidth]{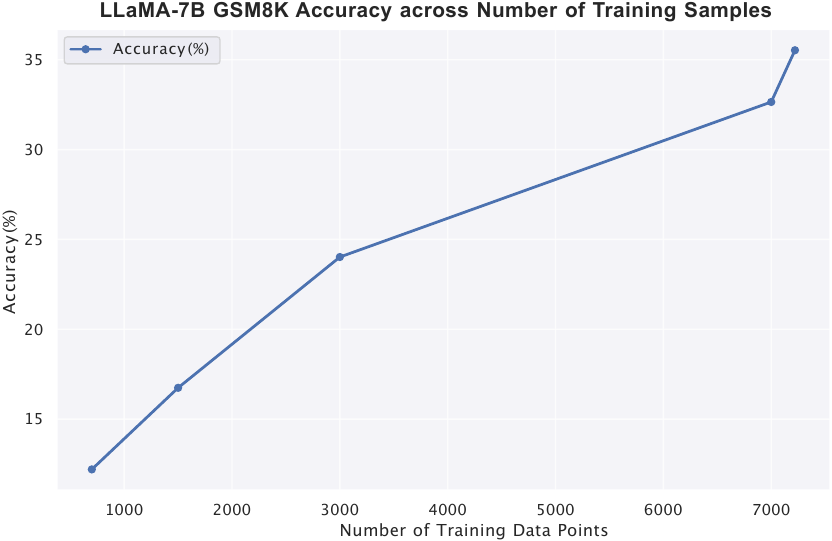}
    \caption{{\bf GSM8K training data was in-compressible with LLaMA-7B.}}
    \label{fig:gsm8k}
\end{figure}


\section{Ethics Statement}
This research aims to provide a model-based algorithm for core-set selection of LLM alignment training data.
Experimental validation in the current work leverages previously published datasets, and are employed in accordance with their intended use cases.
While these datasets are widely used, we acknowledge that we cannot fully ascertain the extent to which they may contain discriminatory, biased, or sensitive material.

\paragraph{Responsible Usage:} Data selection via DavIR is based purely on model’s perceived degree of understanding, and makes no assumption about safety of the original training data. 
As such, caution must be exercised when deploying DavIR in production to ensure that necessary safety practices are adopted both before and after using DavIR for subset selection.

\paragraph{AI Assistant Usage:} Claude-3.5-Sonnet and GPT-4o were used for grammatical correction in the current manuscript.

\bibliography{references}

\clearpage
\newpage
\appendix
\section{Details on Human Evaluation of Model Performance}\label{app:human}
For open-domain freeform QA style evaluation of LLaMA models (an amalgamation of 5 test datasets), we used a combination of LLM (GPT-4) and human as referee. For human evaluation, 20 questions per test dataset were randomly selected as prompts, resulting in total 100 prompts. One human annotator (unpaid, college educated, age 20-25, proficient in English) was provided side-by-side comparison of two responses generated by two models for each question and asked to determine whether the response on the ``left'' is better/same/worse (win/tie/lose) than the response on the ``right''. The annotator is blind to the identity of the model for which the responses were generated, and the responses were randomly ordered.

\section{Effect of Normalization for Score Function}\label{app:A}


In Figure.~\ref{Fig.len-score}, we compare the sequence length of the Alpaca dataset ranked by either un-normalized and normalized score functions. It is apparent that without normalization, data with the highest scores(low ranking) correspond to data with very short sequence lengths. In contrast, the introduction of normalization completely removes the 

\begin{figure}[h]
\centering 
\includegraphics[width=\linewidth]{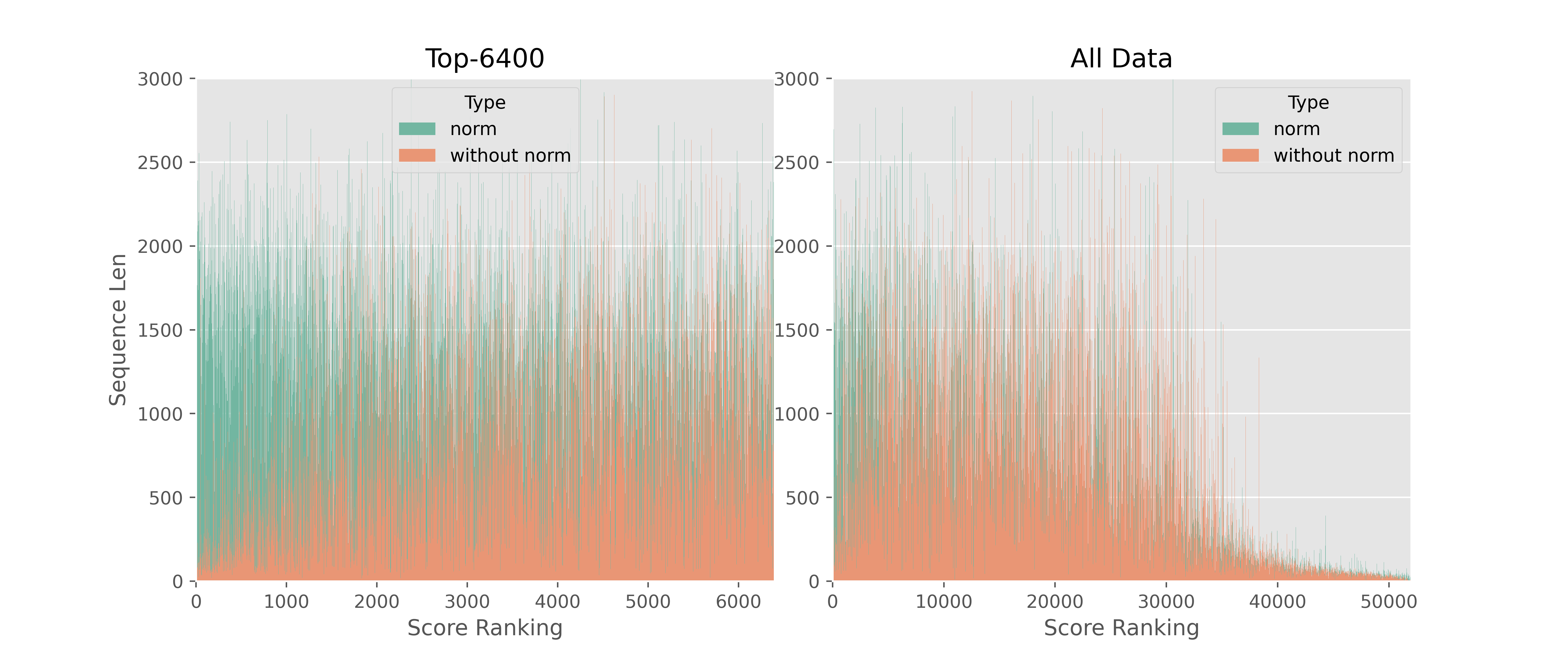} 
\caption{{Effect of normalization on score function and sequence length.} The relationship between the sequence length and ranking for both with and without normalizing score function is shown for the (left) top-6400 subset and (right) full 52K Alpaca-4. We observe that, without normalization, data with highest score values have noticeably short sequence length, which is resolved by normalization.} 
\label{Fig.len-score}
\end{figure}

\section{Choice of Denominator in Normalized Score Function Does Not Impact Ranking}\label{app:B}

\begin{proposition}
Choosing either $\mathcal{L}_{ref}(x, y)$ or $\mathcal{L}_{base}(x, y)$ as the denominator for normalization does not affect the ranking of the learnability score. Specifically, if 
\begin{equation}
\resizebox{\linewidth}{!}{$
\frac{\mathcal{L}_{base}(x_1, y_1) - \mathcal{L}_{ref}(x_1, y_1)}{\mathcal{L}_{base}(x_1, y_1)} > \frac{\mathcal{L}_{base}(x_2, y_2) - \mathcal{L}_{ref}(x_2, y_2)}{\mathcal{L}_{base}(x_2, y_2)}
$}
\end{equation}
then it also holds that
\begin{equation}
\resizebox{\linewidth}{!}{$
\frac{\mathcal{L}_{base}(x_1, y_1) - \mathcal{L}_{ref}(x_1, y_1)}{\mathcal{L}_{ref}(x_1, y_1)} > \frac{\mathcal{L}_{base}(x_2, y_2) - \mathcal{L}_{ref}(x_2, y_2)}{\mathcal{L}_{ref}(x_2, y_2)}
$}
\end{equation}
\end{proposition}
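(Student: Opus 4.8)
The plan is to reduce both normalized scores to simple ratios of the two loss quantities and then exploit the strict monotonicity of the reciprocal map on the positive reals. First I would introduce shorthand $a = \mathcal{L}_{base}(x_1,y_1)$, $b = \mathcal{L}_{ref}(x_1,y_1)$, $c = \mathcal{L}_{base}(x_2,y_2)$, $d = \mathcal{L}_{ref}(x_2,y_2)$, and record the one genuinely load-bearing fact: all four quantities are strictly positive, since each is a negative log-likelihood $-\log \pi(y\mid x)$ with $\pi(y\mid x)\in(0,1)$ for any nontrivial multi-token response. This positivity is what licenses every sign manipulation to follow.

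Next I would rewrite each score as an affine function of a single ratio: the base-normalized score satisfies $\frac{a-b}{a} = 1 - \frac{b}{a}$, while the reference-normalized score satisfies $\frac{a-b}{b} = \frac{a}{b} - 1$. Under this rewriting the hypothesis $\frac{a-b}{a} > \frac{c-d}{c}$ becomes, after cancelling the additive constant $1$ and flipping the sign, the equivalent statement $\frac{b}{a} < \frac{d}{c}$.

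The final step is to apply the map $t\mapsto 1/t$, which is strictly decreasing on $(0,\infty)$. Because $\frac{b}{a}$ and $\frac{d}{c}$ are both positive, the inequality $\frac{b}{a} < \frac{d}{c}$ yields $\frac{a}{b} > \frac{c}{d}$, and subtracting $1$ from both sides recovers precisely the desired conclusion $\frac{a-b}{b} > \frac{c-d}{d}$. Since every implication in this chain is reversible, one in fact obtains an equivalence of the two orderings, not merely the stated one-directional implication.

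The only point requiring care — and hence the main, though minor, obstacle — is the sign bookkeeping around the reciprocal step, which reverses inequalities and does so correctly only because all four losses are guaranteed positive. I would therefore state the positivity assumption explicitly at the outset, noting that it fails only in the degenerate case where a model assigns probability exactly $1$ to a response (yielding zero loss and an undefined normalized score), a situation that does not arise for genuine generations and so may be excluded without loss.
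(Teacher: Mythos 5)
Your proof is correct and follows essentially the same route as the paper's: rewrite each normalized score as an affine function of the ratio $\mathcal{L}_{ref}/\mathcal{L}_{base}$, take reciprocals, and subtract one. The only difference is that you explicitly state the strict positivity of the losses that licenses the reciprocal step (and note the reversibility of the chain), a hypothesis the paper's proof uses silently --- a minor but welcome improvement in rigor, not a different argument.
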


\begin{proof}
Assume that 
\begin{equation}
\resizebox{\linewidth}{!}{$
\frac{\mathcal{L}_{base}(x_1, y_1) - \mathcal{L}_{ref}(x_1, y_1)}{\mathcal{L}_{base}(x_1, y_1)} > \frac{\mathcal{L}_{base}(x_2, y_2) - \mathcal{L}_{ref}(x_2, y_2)}{\mathcal{L}_{base}(x_2, y_2)}
$}
\end{equation}
which can be rewritten as
\begin{equation}
1 - \frac{\mathcal{L}_{ref}(x_1, y_1)}{\mathcal{L}_{base}(x_1, y_1)} > 1 - \frac{\mathcal{L}_{ref}(x_2, y_2)}{\mathcal{L}_{base}(x_2, y_2)}.
\end{equation}
This implies
\begin{equation}
\frac{\mathcal{L}_{ref}(x_1, y_1)}{\mathcal{L}_{base}(x_1, y_1)} < \frac{\mathcal{L}_{ref}(x_2, y_2)}{\mathcal{L}_{base}(x_2, y_2)}.
\end{equation}
Taking the reciprocal of both sides, we get:
\begin{equation}
\frac{\mathcal{L}_{base}(x_1, y_1)}{\mathcal{L}_{ref}(x_1, y_1)} > \frac{\mathcal{L}_{base}(x_2, y_2)}{\mathcal{L}_{ref}(x_2, y_2)}.
\end{equation}
Subtracting one from both sides, we obtain:
\begin{equation}
\resizebox{\linewidth}{!}{$
\frac{\mathcal{L}_{base}(x_1, y_1) - \mathcal{L}_{ref}(x_1, y_1)}{\mathcal{L}_{ref}(x_1, y_1)} > \frac{\mathcal{L}_{base}(x_2, y_2) - \mathcal{L}_{ref}(x_2, y_2)}{\mathcal{L}_{ref}(x_2, y_2)}.
$}
\end{equation}
Thus, we have shown that normalizing by either $\mathcal{L}_{base}$ or $\mathcal{L}_{ref}$ does not affect the ranking of the learnability score.
\end{proof}

\section{Analysis of Data Selected via DavIR}\label{app:C}
To explore what types of data are required by the model during the SFT process, we conducted a further analysis of the data selected by the 7B and 13B models. In Table \ref{tab.same}, we show the number and percentage overlap of the data points selected by both LLaMA-7B and LLaMA-13B models. 

\begin{table}[hbt]
\centering
\renewcommand\arraystretch{1.2}
\resizebox{\linewidth}{!}{%
\begin{tabular}{llllllll}
\noalign{\hrule height 1pt}
No. Data & 800 & 1,600 & 3,200 & 6,400 & 9,600 & 26,000 & 39,000 \\
\noalign{\hrule height 1pt}
No. Overlap & 516 & 1,111 & 2,399 & 5,123 & 8,033 & 24,358 & 38,075 \\
Percent Overlap & 64.5\% & 69.4\% & 74.9\% & 80.0\% & 83.7\% & 93.7\% & 97.6\% \\
\noalign{\hrule height 1pt}
\end{tabular}
}
\caption{{\bf The number and percentage overlap of data points selected by LLaMA-7B and LLaMA-13B}.}
\label{tab.same}
\end{table}

\paragraph{Constituency Parsing via Benepar.}
We used the Benepar to parse constituency of the top 800 data points selected by the LLaMA-7B and LLaMA-13B models, which have 516 overlapping data points as shown in Table.~\ref{tab.same}. Benepar decomposes natural language statements into hierarchical representation of constituency, from which we visualize the top two level verb predicate and noun objects. 

Upon close examination of Figure.~\ref{fig:7Bdata} and Figure.~\ref{fig:13Bdata}, we observe that, comparing the more powerful LLaMA-13B against LLaMA-7B, fewer creation tasks (e.g. write, generate, create) and more interpretative tasks (e.g. explain, describe) data were selected, with slighter more diverse long tail tasks.

\begin{figure*}[h]
  \centering
  \begin{subfigure}[b]{0.49\linewidth}
    \includegraphics[width=\linewidth]{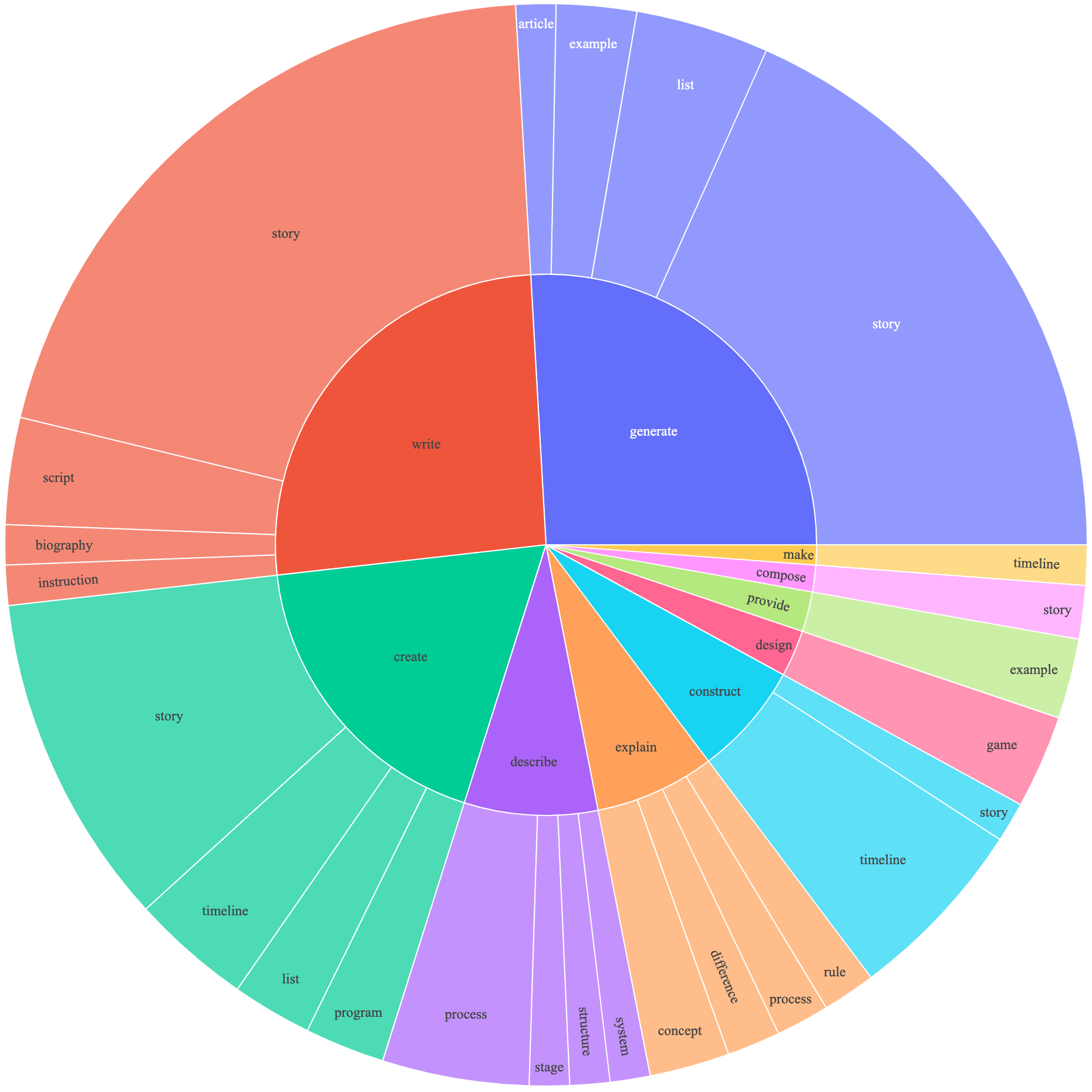}
    \caption{Top 800 data points selected by 7B model}
    \label{fig:7Bdata}
  \end{subfigure}
  \hfill
  \begin{subfigure}[b]{0.49\linewidth}
    \includegraphics[width=\linewidth]{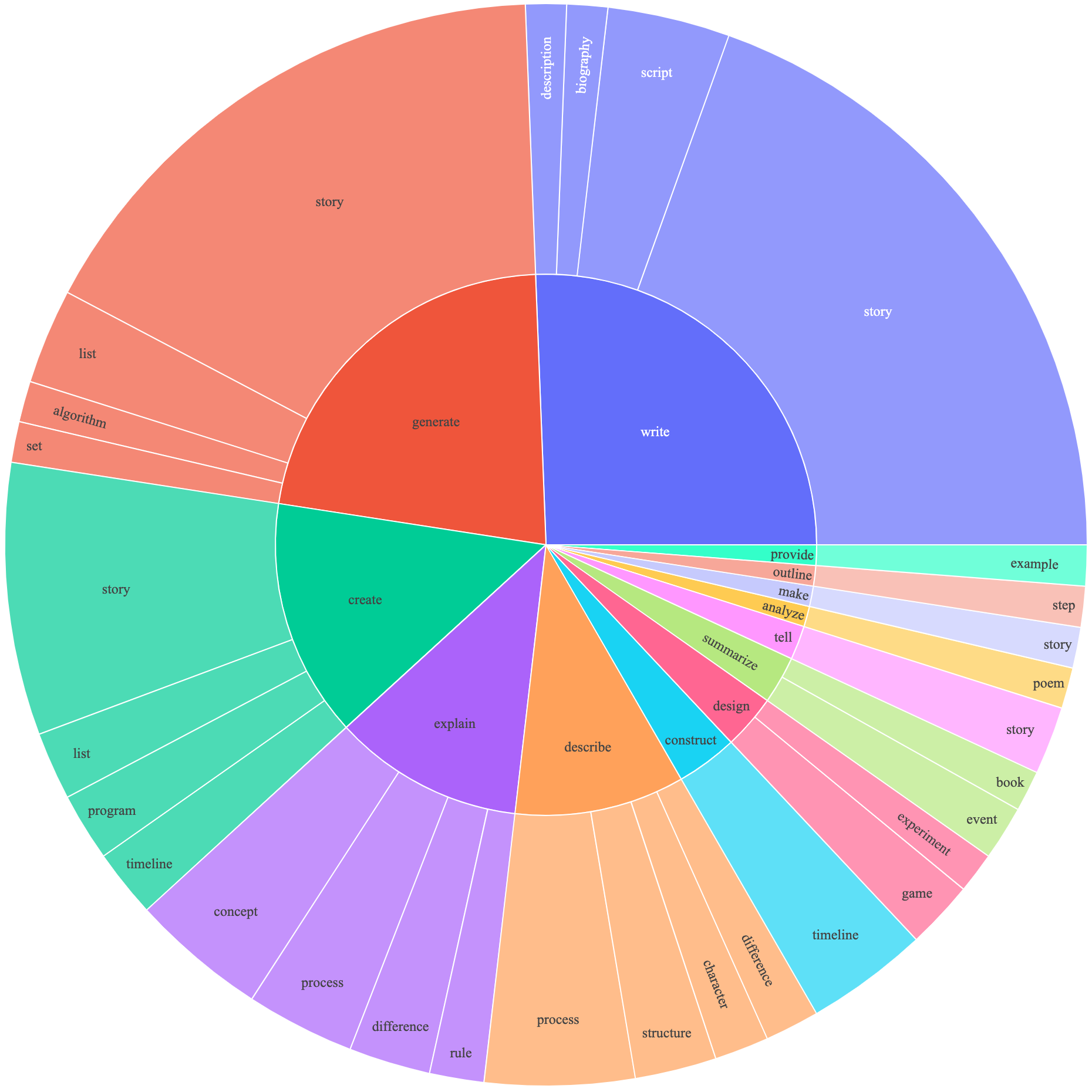}
    \caption{Top 800 data points selected by 13B model}
    \label{fig:13Bdata}
  \end{subfigure}
  \caption{Comparison of top 800 data points selected by different models}
  \label{fig:comparison}
\end{figure*}

\paragraph{Prompt Category Classification with LLM.}
Constituency parsing via Benepar, while helpful, does not effectively convey the semantics of the training data. Instead, we employed GPT-4 as classifier for a more precise semantically-oriented task classification.

In particular, we first classified Alpaca's seed instructions into 7 primary categories. The categories are then used to further classify the first 800 data entries selected by models 7B and 13B. As shown in Table.~\ref{tab.cata}, data in the "Problem Solving and Math" category had the most significant change between the two base models, increasing by 76.9\% from 7B to 13B. We hypothesize that this could be due to the substantial difference in mathematical and reasoning capabilities between the 7B and 13B models, thereby increasing the learnability of these SFT data for the 13B model.

\begin{table}[hbt]
\centering
\renewcommand\arraystretch{1.2}
\resizebox{\linewidth}{!}{%
\begin{tabular}{llll}
\noalign{\hrule height 1pt}
Category & 7B &  13B & $\Delta$ ($\Delta$\%) \\
\noalign{\hrule height 1pt}
Programming and Coding & 60 & 56 & -4(-6.6\%) \\
Planning and Organization & 63 & 57 & -6 (-9.5\%)  \\
Knowledge and Information Extraction&275&296&+21 (+7.6\%) \\
Language and Text Processing&53&45&-8 (-15.1\%) \\
Creative Writing and Entertainment&311&286&-25 (-8.0\%) \\
Problem Solving and Math&26&46&+20 (+76.9\%) \\
Recommendations and Suggestions&9&8&-1 (-11.1\%) \\
Others&3&6&-- \\
\noalign{\hrule height 1pt}
\end{tabular}
}
\caption{{\bf Comparison of first 800 data selected by LLaMA-7B and LLaMA-13B model based on category.}}
\label{tab.cata}
\end{table}

Finally, in Table.~\ref{tab:example_data}, we provide examples of data in Alpaca-4 with the highest normalized scores as computed using LLaMA-7B base model.

\begin{table*}[!t]
    \resizebox{\textwidth}{!}{%
    \begin{tabular}{@{}ccp{4cm}p{3cm}p{6cm}@{}}
    \toprule
    Category      & Score & Instruction                                                            & Input                                         & Output                                                                                                                                                                                    \\ \midrule
    Knowledge     & 12.4  & Identify the odd one out: Twitter, Instagram, Telegram.                &                                               & Telegram.                                                                                                                                                                                 \\
    Knowledge     & 9.80  & Convert the following sentence into the present continuous tense.      & He reads books.                               & He is reading books.                                                                                                                                                                      \\
    Knowledge     & 8.78  & Identify the most suitable adverb for the following sentence.          & She works.                                    & She works diligently.                                                                                                                                                                     \\
    Knowledge     & 8.21  & Analyze the tone of the following sentences.                           & I am so upset with you right now.             & The tone of the sentences is angry and accusatory.                                                                                                                                        \\
    Transcription & 7.38  & Rewrite the following sentence using active voice.                     & The news report was read by the captain.      & The captain read the news report.                                                                                                                                                         \\
    Knowledge     & 7.59  & How did Julius Caesar die?                                             &                                               & Julius Caesar was assassinated by a group of up to 60 conspirators, led by Gaius Cassius Longinus and Marcus Junius Brutus, in the Senate House on the Ides of March (15 March) of 44 BC. \\
    Mathematics   & 6.09  & Calculate the total surface area of a cube with a side length of 5 cm. &                                               & Total surface area of the cube: 150 cm$^2$.                                                                                                                                               \\
    Knowledge     & 5.96  & Provide one example of a cultural practice.                            &                                               & An example of a cultural practice is the Japanese tea ceremony.                                                                                                                           \\
    Mathematics   & 4.96  & Use the given data to calculate the median.                            & {[}2, 3, 7, 8, 10{]}.                         & The median of the given data is 7.                                                                                                                                                        \\
    Knowledge     & 5.15  & What is the product of 6 and 2?                                        &                                               & The product of 6 and 2 is 12.                                                                                                                                                             \\
    Knowledge     & 5.11  & Give two examples of a liquid.                                         &                                               & Two examples of liquids are water and oil.                                                                                                                                                \\
    Correction    & 5.12  & Evaluate this sentence for spelling and grammar mistakes.              & He finnished his meal and left the resturant. & He finished his meal and left the restaurant.                                                                                                                                             \\
    Knowledge     & 4.95  & Arrange the given numbers in ascending order.                          & 2, 4, 0, 8, 3.                                & 0, 2, 3, 4, 8.                                                                                                                                                                            \\
    Knowledge     & 4.66  & Given a set of numbers, find the maximum value.                        & Set: \{10, 3, 25, 6, 16\}.                    & The maximum value is 25.                                                                                                                                                                  \\
    Mathematics   & 4.53  & Convert from Celsius to Fahrenheit.                                    & Temperature in Celsius: 15.                   & Temperature in Fahrenheit: 59.                                                                                                                                                            \\
    Knowledge     & 4.16  & Transform the following sentence into the passive voice.               & I bought a book.                              & A book was bought by me.                                                                                                                                                                  \\
    Mathematics   & 4.12  & Convert the given equation into an algebraic expression.               & 3x + 5y = 9.                                  & 3x + 5y - 9 = 0.                                                                                                                                                                          \\
    Knowledge     & 4.10  & What are the three primary colors?                                     &                                               & The three primary colors are red, blue, and yellow.                                                                                                                                       \\
    None           & 3.93  & Render a 3D model of a house.                                          &                                               & \textless{}nooutput\textgreater This type of instruction cannot be fulfilled by a GPT model.                                                                                              \\ \bottomrule
    \end{tabular}
    }
    \caption{Example data with highest DavIR scores selected with LLaMA-7B base model.}
    \label{tab:example_data}
\end{table*}

\section{Statistical Analysis of AlpacaEval}\label{sect:statistical_analysis}
We first note that both the data selection procedure of DavIR and the model inference and evaluation procedures are deterministic (greedy decoding). Therefore, to establish a statistical significance analysis of the comparison between DavIR's performance to other methods, we performed bootstrap estimation with 1000 samples from the 805 questions of the AlpacaEval dataset, giving us the 95\% confidence interval shown in Table.~\ref{tab:gemma2b_with_CI}.

As the bootstrap estimations of the performances on AlpacaEval is highly Gaussian for all experiments with similar variance, we performed t-test between DavIR and all other baseline method. The p-values , shown in Table.~\ref{tab:t_test} below show that the performance gain of DavIR over other methods is indeed significant across number of samples (with p-values very close to 0).

\begin{table*}[t]
\resizebox{\linewidth}{!}{%
\begin{tabular}{@{}llllllllll@{}}
\toprule
\textbf{Method}             & \textbf{Selection} & \textbf{3000}                 & \textbf{4000}                 & \textbf{5000}                 & \textbf{6000}                 & \textbf{7000}                 & \textbf{8000}                 & \textbf{9000}                 & \textbf{10000}                \\ \midrule
Random                      & N/A                & 10.7(8.6$\sim$12.8)           & 12.7(10.7$\sim$14.7)          & 15.9(13.6$\sim$18.3)          & 16.1(13.8$\sim$18.5)          & 17.0(14.8$\sim$19.3)          & 16.4(14.1$\sim$18.8)          & 16.7(14.5$\sim$18.9)          & 17.6(15.1$\sim$20.2)          \\
\multirow{2}{*}{EL2N}       & Lowest             & 10.4(8.5$\sim$12.2)           & 13.5(11.3$\sim$15.7)          & \textbf{17.6(15.4$\sim$19.8)} & 17.6(15.4$\sim$20.1)          & 19.0(16.5$\sim$21.6)          & 18.5(16.2$\sim$20.9)          & 18.7(16.5$\sim$21.3)          & 18.4(16.0$\sim$21.0)          \\
                            & Highest            & 10.0(8.2$\sim$11.9)           & 11.1(9.3$\sim$13.2)           & 11.2(9.3$\sim$13.2)           & 11.8(9.7$\sim$13.9)           & 12.4(10.3$\sim$14.7)          & 13.2(11.2$\sim$15.4)          & 13.9(11.8$\sim$16.2)          & 14.3(12.3$\sim$16.7)          \\
\multirow{2}{*}{Forgetting} & Lowest             & 10.4(8.6$\sim$12.3)           & 12.8(10.7$\sim$15.0)          & 16.6(14.4$\sim$19.1)          & 16.5(14.2$\sim$18.9)          & 17.1(14.8$\sim$19.5)          & 17.0(14.7$\sim$19.4)          & 17.7(15.6$\sim$20.1)          & 16.7(14.6$\sim$19.1)          \\
                            & Highest            & 9.5(7.7$\sim$11.4)            & 12.4(10.5$\sim$14.6)          & 13.4(11.3$\sim$15.5)          & 15.5(13.2$\sim$17.9)          & 16.7(14.6$\sim$18.9)          & 17.8(15.6$\sim$20.3)          & 18.9(16.6$\sim$21.3)          & 18.2(15.7$\sim$20.5)          \\
\multirow{2}{*}{DataInf}    & Lowest             & 10.2(8.4$\sim$12.1)           & 12.3(10.3$\sim$14.4)          & 13.9(11.9$\sim$16.0)          & 15.1(12.7$\sim$17.4)          & 15.0(13.0$\sim$17.4)          & 14.3(12.1$\sim$16.6)          & 15.3(13.2$\sim$17.5)          & 15.4(13.3$\sim$17.7)          \\
                            & Highest            & 10.3(8.4$\sim$12.2)           & \textbf{13.8(11.6$\sim$15.9)} & 15.9(13.7$\sim$18.2)          & 18.6(16.2$\sim$21.2)          & 18.8(16.2$\sim$21.2)          & 19.8(17.4$\sim$22.6)          & 18.5(16.1$\sim$20.9)          & 18.9(16.4$\sim$21.3)          \\
\multirow{2}{*}{RHO}        & Lowest             & 9.6(7.9$\sim$11.6)            & 12.1(10.0$\sim$14.3)          & 13.8(11.7$\sim$16.0)          & 15.3(13.1$\sim$17.5)          & 15.7(13.5$\sim$18.0)          & 16.1(13.8$\sim$18.4)          & 16.2(13.9$\sim$18.4)          & 16.6(14.2$\sim$19.0)          \\
                            & Highest            & 9.9(8.1$\sim$11.9)            & 12.2(10.3$\sim$14.5)          & 14.6(12.3$\sim$16.8)          & 15.8(13.5$\sim$18.0)          & 15.4(13.1$\sim$17.6)          & 15.5(13.2$\sim$17.9)          & 16.5(14.2$\sim$18.9)          & 16.7(14.4$\sim$19.0)          \\
\multirow{2}{*}{DavIR}      & Lowest             & 10.5(8.7$\sim$12.5)           & 12.7(10.7$\sim$14.8)          & 14.2(12.0$\sim$16.4)          & 14.3(12.1$\sim$16.5)          & 13.1(11.1$\sim$15.3)          & 12.0(9.9$\sim$13.8)           & 13.2(11.1$\sim$15.4)          & 13.6(11.4$\sim$15.7)          \\
                            & Highest            & \textbf{10.8(8.8$\sim$12.9)}  & 13.1(11.0$\sim$15.2)          & 17.1(14.8$\sim$19.6)          & \textbf{20.2(17.7$\sim$22.7)} & \textbf{20.2(17.6$\sim$22.7)} & \textbf{20.3(17.9$\sim$22.9)} & \textbf{20.2(17.8$\sim$22.5)} & \textbf{19.7(17.3$\sim$22.1)} \\
Full                        & N/A                & \textit{18.3(15.9$\sim$20.6)} & \textit{18.3(15.9$\sim$20.6)} & \textit{18.3(15.9$\sim$20.6)} & \textit{18.3(15.9$\sim$20.6)} & \textit{18.3(15.9$\sim$20.6)} & \textit{18.3(15.9$\sim$20.6)} & \textit{18.3(15.9$\sim$20.6)} & \textit{18.3(15.9$\sim$20.6)} \\ \bottomrule
\end{tabular}
}
\caption{{\bf DavIR comparison with baselines with 95\% Confidence Interval.}}
\label{tab:gemma2b_with_CI}
\end{table*}

\begin{table*}[t]
\resizebox{\linewidth}{!}{%
    \begin{tabular}{@{}llllllllll@{}}
    \toprule
    \textbf{Method}             & \textbf{Selection} & \textbf{3000} & \textbf{4000} & \textbf{5000} & \textbf{6000} & \textbf{7000} & \textbf{8000} & \textbf{9000} & \textbf{10000} \\ \midrule
    Random                      & N/A                & 5.3E-03       & 2.4E-16       & 1.3E-93       & 0.0E+00       & 0.0E+00       & 0.0E+00       & 0.0E+00       & 8.8E-229       \\
    \multirow{2}{*}{EL2N}       & Lowest             & 4.4E-26       & -             & -             & 3.1E-307      & 1.3E-78       & 1.2E-187      & 2.8E-134      & 5.7E-99        \\
                                & Highest            & 6.0E-65       & 6.4E-276      & 0.0E+00       & 0.0E+00       & 0.0E+00       & 0.0E+00       & 0.0E+00       & 0.0E+00        \\
    \multirow{2}{*}{Forgetting} & Lowest             & 3.9E-22       & 6.2E-08       & 5.2E-17       & 0.0E+00       & 0.0E+00       & 0.0E+00       & 1.5E-321      & 0.0E+00        \\
                                & Highest            & 1.7E-160      & 1.8E-43       & 0.0E+00       & 0.0E+00       & 0.0E+00       & 0.0E+00       & 3.9E-108      & 1.1E-146       \\
    \multirow{2}{*}{DataInf}    & Lowest             & 2.9E-38       & 2.4E-63       & 0.0E+00       & 0.0E+00       & 0.0E+00       & 0.0E+00       & 0.0E+00       & 0.0E+00        \\
                                & Highest            & 2.6E-29       & -             & -             & 4.4E-136      & 4.4E-116      & 1.1E-17       & 7.9E-180      & 3.8E-46        \\
    \multirow{2}{*}{RHO}        & Lowest             & 3.5E-141      & 1.8E-84       & 0.0E+00       & 0.0E+00       & 0.0E+00       & 0.0E+00       & 0.0E+00       & 0.0E+00        \\
                                & Highest            & 6.6E-86       & 5.2E-67       & 0.0E+00       & 0.0E+00       & 0.0E+00       & 0.0E+00       & 0.0E+00       & 0.0E+00        \\
    DavIR                       & Lowest             & 1.1E-12       & 1.5E-16       & 0.0E+00       & 0.0E+00       & 0.0E+00       & 0.0E+00       & 0.0E+00       & 0.0E+00        \\
    Full                        & N/A                & -             & -             & -             & 3.9E-189      & 6.1E-193      & 1.8E-224      & 3.2E-209      & 6.2E-124       \\ \bottomrule
    \end{tabular}
}
\caption{{\bf p-values of t-test comparing DavIR and all other selection methods presented in Table.~\ref{tab:davir_vs_everyone} and Table.~\ref{tab:gemma2b_with_CI}.} Note that since the hypothesis s that DavIR out-performs other methods, only results where DavIR out-performs the baseline methods have corresponding p-values.}
\label{tab:t_test}
\end{table*}

\end{document}